\runningtitle{Swap distance minimization in SOV languages. }
\title{Swap distance minimization in SOV languages. Cognitive and mathematical foundations.}
\affil{1}{Some affiliation.}
\affil{1}{Quantitative, Mathematical and Computational Linguistics Research Group. Departament de Ci\`encies de la Computaci\'o, Universitat Polit\`ecnica de Catalunya (UPC), Barcelona, Catalonia, Spain.}
\affil{2}{Linguistics Department, University of Michigan, Ann Arbor, Michigan, USA.}
\newcommand{\pvalue}{p\mbox{-value}}
\newcommand{\pvalues}{p\mbox{-values}}
\newcommand{\mytablefootnote}[1]{
\textcolor{red}{
\begin{minipage}{0.96\textwidth}
\vspace{0.3cm}
\footnotesize {\em Note:} #1 
\end{minipage}
}
}
\newtheorem{property}{Property}
\newtheorem{corollary}{Corollary}
\newcommand{\repository}[0]{{\em some link}}
\newcommand{\repository}[0]{\url{https://osf.io/b62ep/}}
\begin{document}
\maketitle


\begin{abstract}
Distance minimization is a general principle of language. A special case of this principle in the domain of word order is swap distance minimization. This principle predicts that variations from a canonical order that are reached by fewer swaps of adjacent constituents are lest costly and thus more likely. 
Here we investigate the principle in the context of the triple formed by subject (S), object (O) and verb (V). We introduce the concept of word order rotation as a cognitive underpinning of that prediction. When the canonical order of a language is SOV, the principle predicts SOV < SVO, OSV < VSO, OVS < VOS, in order of increasing cognitive cost.  We test the prediction in three flexible order SOV languages: Korean (Koreanic), Malayalam (Dravidian), and Sinhalese (Indo-European). Evidence of swap distance minimization is found in all three languages, but it is weaker in Sinhalese. Swap distance minimization is stronger than a preference for the canonical order in Korean and especially Malayalam. 
\end{abstract}


\begin{keywords}
word order preferences, canonical order, swap distance minimization
\end{keywords}



\section{Introduction}
\label{sec:introduction}

Distance minimization pervades languages. 
In the domain of word order, there is massive evidence that the distance between words in a syntactic dependency representation of the sentence is minimized \parencite{Liu2008a,Futrell2015a,Ferrer2020b}, a consequence of the syntactic dependency distance minimization principle \parencite{Ferrer2004b}.  A general principle of distance minimization in word order, which instantiates as syntactic dependency distance minimization, has been proposed \parencite{Ferrer2017c}. 
Furthermore, the action of distance minimization in languages goes beyond the common notion of physical distance. 
Iconicity  -- which has also been argued to shape word order  \parencite{Motamedi2022a} -- can be viewed as a response to a pressure to minimize the distance between a linguistic form and meaning in production and interpretation \parencite{Perniss2010a, Dingemanse2015a, Occhino2017a, Winter2022a}.
Alignment in dialog \parencite{Pickering2006a,Garrod2013a} is the minimization of the distance between two or more speakers involved in a conversation.  
Because it operates across domains, distance minimization is likely to be one of the most general principles of language.

Distance minimization in word order \parencite{Ferrer2017c} presents itself as the syntactic dependency distance minimization principle \parencite{Ferrer2004b} and the swap distance minimization principle \parencite{Ferrer2016c}.
Critical characteristics of a compact but general theory of language are to specify (a) the cognitive origins of its principles (b) the cross linguistic support of its principles, and (c) the separation between principles and manifestations. Then compactness is achieved by uncovering the many distinct manifestations of the same principle (alone or interacting with other principles). Further, among the manifestations of a given principle, one has to distinguish direct from indirect manifestations.

\subsection{Syntactic dependency distance minimization}

Next we will revise the principle of syntactic dependency distance minimization from the standpoint of (a), (b) and (c) as a road map for research on swap distance minimization. 

Concerning (a), syntactic dependency distance minimization is argued to result from counteracting interference and decay of activation in linguistic processes \parencite{Liu2017a, Temperley2018a} and, accordingly, syntactic dependency distance in sentences is positively correlated with reading times \parencite{Niu2022a}.

Concerning (b), direct evidence of the principle of syntactic dependency distance minimization stems from the finding that syntactic dependency distances are smaller than expected by chance in samples of languages that have been growing in size and typological diversity \parencite{Ferrer2004b,Temperley2008a,Liu2008a,Futrell2015a,Ferrer2020b,Futrell2020a}. 

Concerning (c), various manifestations of syntactic dependency distance minimization have been predicted. First, the acceptability of word orders and related word order preferences \parencite{Lin1996a,Morrill2000a}. Second, formal properties of syntactic dependency structures such as the scarcity of crossing dependencies \parencite{Gomez2016a} and the tendency to uncover the root \parencite{Ferrer2008e}, thus predicting projectivity (continuous constituents) and planarity with high probability. Furthermore, syntactic dependency distance minimization predicts, in combination with projectivity, that the root of a sentence should be placed at the center \parencite{Gildea2007a,Alemany2021a}. An implication of the predictions is that verbs, which are typically the roots of a sentence, should be placed at the center, as in SVO orders or SVOI orders. For word orders in which the verb appears first or last, syntactic dependency distance minimization predicts consistent branching for dependents of nominal heads \parencite{Ferrer2014e}, demonstrating the ``unnecessity'' of the headedness parameter of principles \& parameters theory \parencite{Ferrer2014e, Corbett1993a}. \footnote{See Table 1 of \textcite{Ferrer2021a} for further predictions. }
The principle of swap distance minimization has received much less attention.


\subsection{The order of S, V and O}

Research on the order of S, V and O is biased towards SOV and SVO languages.
SOV and SVO are the most attested dominant orders ($76.5\%$ according to \textcite{wals-81}; $83.6\%$ of languages and $69.6\%$ families according to \textcite{Hammarstroem2016a}). Accordingly, a large body of experimental research in the silent gesture paradigm has focused on factors that determine the choice between SOV and SVO (see \textcite{Motamedi2022a} and references therein). That bias neglects that there are languages that lack a dominant order ($13.7\%$ of languages according to \textcite{wals-81}; $2.3\%$ of languages and $6.1\%$ of families according to \textcite{Hammarstroem2016a})  or that exhibit two, rather than one, dominant orders \parencite{wals-81}. Crucially, in many languages which do exhibit a dominant order, the other 5 non-dominant orders are produced. Though understanding such variation is vital, documentation and analyses of non-dominant orders receive relatively little attention \parencite{Levshina2023a}. This is reflected in psycholinguistic work, where the bulk of experimental research on the processing cost of word order focuses on just two orders, e.g. SVO versus OVS \parencite{Kaiser2004a,Prabath2017a} or SVO versus VOS \parencite{Koizumi2016a}.\footnote{Note that practical challenges contribute to this. Comparing all six orders in an experiment requires more participants and different statistical tools as compared to simpler experimental designs; cf. \textcite{Ohta2017a}.}
This challenge is the motivation of Namboodiripad's research program on the cognitive cost of the six possible orders of S, V, and O in flexible order languages \parencite{Namboodiripad2017a, Namboodiripad2019a,Namboodiripad2020a,Levshina2023a}.  
This is also why swap distance minimization is brought into play in this article.  

\begin{figure}[t]
  \centering
      \includegraphics[width=0.4\textwidth]{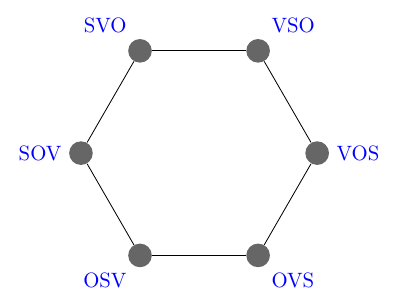}
  \caption{\label{fig:word_order_permutation_ring} The word order permutation ring.}
\end{figure}

\subsection{Swap distance minimization}

Swap distance minimization predicts pairs of primary alternating dominant orders \parencite{Ferrer2016c} and has been applied to shed light on the evolution of the dominant orders of S, V, and O from an ancestral SOV order \parencite{Ferrer2013e,Ferrer2016c}. 
In general, the principle of swap distance minimization states that variations from a certain word order (canonical or not) that require fewer swaps of adjacent constituents are less costly \parencite{Ferrer2016c,Ferrer2013e}.
To illustrate how the principles works on triples, let us consider the case of the triple formed by subject (S), object (O) and verb (V). The so-called word order permutation ring is a graph where the vertices are all the six possible orderings of the triple, and edges between two orders indicate that one order can be obtained from the other by swapping a pair of adjacent constituents (\autoref{fig:word_order_permutation_ring}). SOV and SVO are linked because swapping OV in SOV produces SVO, or equivalently, swapping VO in SVO produces SOV. For the case of triples, the permutation ring is an instance of a kind of graph which is called permutahedron in combinatorics  \parencite{Ceballos2015a}.
The swap distance between two orders is the distance (in edges) between two word orders in the permutahedron, namely, their distance is the minimum number of swaps of adjacent constituents that transforms one order into the other and vice versa. 

A prediction of the swap distance minimization is that the cognitive cost of a word order will depend on its distance to the canonical order. When the canonical order of a language is SOV, SOV is at swap distance 0, SVO and OSV are at swap distance 1, VSO and OVS are at swap distance 2, and VOS is at swap distance 3 (\autoref{fig:word_order_permutation_ring}). Thus, the principle predicts (from easiest to most costly) the sequence\footnote{A sequence of this sort can be expressed with the following notation \parencite{Tamaoka2011a} 
\begin{equation*}
SOV < SVO = OSV < VSO = OVS < VOS.
\end{equation*}
In our notation, $=$ is replaced by a comma. 
}
\begin{equation}
SOV < SVO, OSV < VSO, OVS < VOS.
\label{eq:canonical_SOV}
\end{equation}

For other canonical orders, the predictions that the permutahedron generates as a function of the canonical order are, in order of increasing processing cost (the canonical order appears first)
\begin{align}
SVO < SOV, VSO < VOS, OSV < OVS \nonumber \\
VSO < SVO, VOS < SVO, OVS < OSV \nonumber\\
VOS < VSO, OVS < SVO, OSV < SOV \nonumber\\
OVS < VOS, OSV < SOV, SVO < SVO \nonumber\\
OSV < SOV, OVS < SVO, VOS < VSO.
\label{eq:predictions_for_non_SOV_canonical_orders}
\end{align}

It is well-known that canonical orders are easier to process than non-canonical orders \cite{Meyer2016a,Menn2000a} and thus canonical orders are processed faster than non-canonical orders \parencite{Hyona1997a,Kaiser2004a,Tamaoka2011a}. The principle of swap distance minimization subsumes a preference for the canonical order but, crucially, it introduces a gradation for non-canonical orders, namely not all non-canonical orders are equally easy to process. The gradation is determined, by a precise definition of distance to the canonical order (\autoref{eq:canonical_SOV} and \autoref{eq:predictions_for_non_SOV_canonical_orders}).   
In contrast to \autoref{eq:canonical_SOV}, just of preference of the canonical word order is expressed simply as
\begin{equation}
SOV < SVO, OSV , VSO, OVS, VOS.
\label{eq:canonical_SOV_only_preference_for_canonical}
\end{equation}

\subsection{The present article}

Here we aim to contribute to research on swap distance minimization in the three directions above: (a), (b) and (c). We will increase the support for the principle both in terms of (a) and (b). As for (a), here we will introduce the concept of word order rotation as the analog of rotation in visual recognition experiments \parencite{Cooper1973a,Tarr1989a}. In addition, we aim to validate the arguments using proxies of cognitive cost that are commonly used in cognitive science research such as reaction times and error rates \parencite{Cooper1973a,Tamaoka2011a}.
As for (b), we will investigate the principle in languages from distinct linguistic families and quantify its effect with respect to other word order principles. 
As for (c), we will show that swap distance minimization predicts the acceptability of the order of subject, verb and object as syntactic dependency distance minimization predicts the acceptability of sentences \parencite{Lin1996a,Morrill2000a}. Put differently, we will show that swap distance minimization manifests in the form of acceptability preferences. 

We select three SOV languages which exhibit considerable word order flexibility, each from different language families: Sinhalese (Indo-European), Malayalam (Dravidian), and Korean (Koreanic). For each of these languages, all of the six possible orderings of S, V, and O are grammatical, attested, and have the same truth-conditional meaning \parencite{Tamaoka2011a,Namboodiripad2019a,Namboodiripad2017a}, though the degree of flexibility may vary depending on the context or measure of flexibility \parencite{Levshina2023a, Yan2023a}. Sinhalese and Malayalam have been regarded as non-configurational \parencite{Tamaoka2011a,Prabath2017a,Mohanan1983a}. Interestingly, Malayalam exhibits more word order flexibility than Korean while, in turn, the flexibility of Korean is closer to that of English (Figure 8 of  \textcite{Levshina2023a}).   

In the context of Malayalam, the acceptability of a certain order has been argued to be determined by the position of the verb \parencite{Namboodiripad2016a}. We will transform this specific proposal into a general competing hypothesis, namely that the cost of a certain order (no matter how it is measured) is determined to some degree by the position of the verb, and link it with the theory of word order: a decrease in cost of processing of the verb as it is placed closer to the end is actually a prediction of the principle of minimization of the surprisal (maximization of the predictability) of the head \parencite{Ferrer2013f}.\footnote{A word of caution is necessary concerning the term competing hypothesis. It does not mean that maximization of predictability excludes swap distance minimization. Both forces can co-exist, and it is tempting to think that swap distance minimization implies the maximization of the predictability of the head for certain canonical orders, e.g., SOV or OSV. Indeed, we will show that swap distance and the position of the head (the verb) are significantly correlated.
}
In contrast to \autoref{eq:canonical_SOV}, a preference for verb final  would be expressed simply as
\begin{equation}
SOV, OSV < SVO, OVS < VSO, VOS.
\label{eq:preference_for_verb_final}
\end{equation}

The reminder of the article is organized as follows. \autoref{sec:theoretical_foundations} introduces the concept of word order rotation and a new mathematical framework. 
\autoref{sec:material} justifies the choice of SOV languages and presents the data while \autoref{sec:methodology} presents the statistical analysis methods.
\autoref{sec:results} shows evidence of swap distance minimization as predicted by \autoref{eq:canonical_SOV} in these three languages and compares it against two competing principles: a preference for the canonical order and a preference for the verb towards the end. 
\autoref{sec:discussion} provides hawk-eye view of the results, speculates on their relation with the degree of word order flexibility of the languages, and proposes some issues for future research. 

\section{Theoretical foundations}
\label{sec:theoretical_foundations}

\subsection{Word order rotations}

\label{sec:word_order_rotations}

Here we present an argument on the cognitive support of the minimization of swap distance to the canonical order that is inspired by classic research on the cognitive effort of the visual recognition of objects \parencite{Cooper1973a,Tarr1989a}. That research revealed that such cost depends on the rotation angle with respect to some canonical representation of the object. By analogy, the object is the triple formed by subject, object, and verb; we assume that its canonical representation is the order that language experts have identified as canonical; the rotation angle is the swap distance to the canonical order. However, the analogy with visual rotation can be made stronger by drawing the word order permutation ring on a circle as in \autoref{fig:word_order_permutation_ring}, placing a rotation axis at the center of the circle, and replacing the swap distance to the canonical order by the absolute value of the minimum angle of the rotation that is needed to put
\begin{itemize}
    \item 
    The word order of interest in the original position of the canonical order, or equivalently,
    \item
    The canonical order in the original position of the word order of interest. 
\end{itemize}
The rotations that are needed to transform any order of S, V and O into SOV are shown in \autoref{fig:rotation_in_permutation_ring}. 
Accordingly, the orders at distance 1 imply a rotation angle of $\pm 60\degree$, orders at distance 2 imply a rotation of angle of $\pm 120\degree$, and finally the order at distance 3 implies a rotation angle of $\pm 180\degree$. 
In mathematical language, $\alpha$, the angle of rotation (in degrees) that is required to transform a certain word order into the canonical word order, and $d$, the swap distance between an order and the canonical, obey
$$d = \frac{|\alpha|}{60}.$$

\begin{figure}[t]
  \centering
      \includegraphics[width=\textwidth]{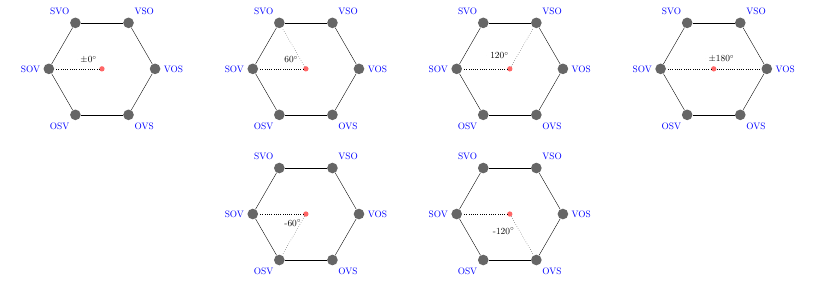}
  \caption{\label{fig:rotation_in_permutation_ring} 
  Rotations of word orders with respect to an axis at the center of the ring (marked in red). Recall that clockwise rotations have negative sign while anticlockwise rotations have positive sign. To become the canonical order SOV, (a) SOV needs a rotation of $\pm 0$ degrees,  (b) SVO needs a rotation of $60$ degrees,  (c) VSO needs a rotation of $120$ degrees, (d) VOS needs a rotation of $\pm 180$ degrees, (e) OSV needs a rotation of $-60$ degrees, (f) OVS needs a rotation of $-120$ degrees.  
}
\end{figure}

\subsection{The correlation between a distance measure and cognitive cost}

Here we present a new mathematical framework to measure the effect distinct word order principles by translating 
\autoref{eq:canonical_SOV}, \autoref{eq:canonical_SOV_only_preference_for_canonical},
and 
\autoref{eq:preference_for_verb_final}
into Kendall $\tau$ correlations and also to understand how these principles interact.

We define $s$ as the cognitive cost of a certain ordering of S, V, and O. Swap distance minimization predicts that $s$ should increase following the ordering in \autoref{eq:canonical_SOV}. Accordingly, we test the swap distance minimization hypothesis by measuring $\tau(d, s)$, the Kendall $\tau$ correlation between the target score $s$ and $d$, which is the swap distance between an order and the canonical order SOV. 
To test the hypothesis of the minimization of surprisal of the verb (\autoref{eq:preference_for_verb_final}), we measure $\tau(p, s)$, namely the Kendall $\tau$ correlation between the target score $s$ and $p$, the distance of the verb to the end (0 for verb-last, 1 for medial verb and 2 for verb first). 
Finally, as swap distance minimization subsumes a preference for the canonical order (\autoref{eq:canonical_SOV_only_preference_for_canonical}), we also define a control hypothesis, namely that the effect is merely simply determined by the word order being canonical or not. That hypothesis is tested by means of $\tau(c, s)$, the Kendall correlation between the target score and $c$, a binary variable that is zero if the order is canonical and 1 otherwise. We refer to $d$, $p$ and $c$ as distance measures. $c$ is a binary distance to the canonical order. The values of these distances in an SOV language are shown in \autoref{tab:acceptability_in_Malayalam}.

\begin{table}[ht]
\center

\caption{\label{tab:acceptability_in_Malayalam} 
For each of the six possible orders, we show the swap distance to the canonical order SOV ($d$), the distance of the verb to the end of the triple ($p$), the binary distance to canonical order ($c$), the mean $z$-score acceptability according to the results of the experiments by 
\textcite[Table 2.7]{Namboodiripad2017a} and the corresponding rank transformation (the most acceptable has rank 1, the second most acceptable has rank 2 and so on). 
}
\begin{tabular}{clllll}
\toprule
\textbf{Order} & \textbf{$d$} & \textbf{$p$} & \textbf{$c$} & \textbf{Acceptability} & \textbf{Rank transformation} \\
\midrule
  SOV & 0 & 0 & 0 & 1.05  & 1 \\
  OSV & 1 & 0 & 1 & 0.80  & 2 \\
  SVO & 1 & 1 & 1 & 0.36  & 3 \\
  OVS & 2 & 1 & 1 & 0.30  & 4 \\
  VSO & 2 & 2 & 1 & -0.14 & 5 \\
  VOS & 3 & 2 & 1 & -0.36 & 6 \\
\bottomrule
\end{tabular}
\mytablefootnote{
$p$ takes the values 0 for verb final, 1 for verb medial, and 2 for verb initial. $c$ takes a value of $0$ if the order is canonical and 1 otherwise. 
}
\end{table}

The are the three main variants of the Kendall $\tau$ correlation: $\tau_a$, $\tau_b$ and $\tau_c$ \parencite{Kendall1970a}. The simplest definition is that of $\tau_a$, that is defined, for a bivariate sample of size $n$, as 
\begin{equation}
\tau_a = \frac{n_c - n_d}{{n \choose 2}},
\label{eq:Kendall_tau_correlation}
\end{equation}
where $n_c$ is the number of concordant pairs and $n_d$ is the number of discordant pairs.

$\tau_a$ performs no adjustment for ties, while $\tau_b$ and $\tau_c$ do. In our study, adjustments for ties bother. As swap distance minimization subsumes the preference for the canonical order, we want to warrant that if $\tau(d, s)$ is sufficiently large then $\tau(d, s) > \tau(c, s)$ because swap distance minimization is a more precise hypothesis than a preference for the canonical order. In the Appendix, we show two very useful properties of $\tau_a$: if $\tau_a$ is large enough, then one can be certain that swap distance minimization does not reduce to a preference for the canonical order or to a preference for verb-last. 
In the language of mathematics, if $\tau_a(d, s) > 0.\bar{3}$ then $\tau_a(d, s) > \tau_a(c, s)$; if $\tau_a(d, s) > 0.8$ then $\tau_a(d, s) > \tau_a(p, s), \tau_a(c, s)$.
We also want to ensure that the comparison between $\tau(d, s)$ and $\tau(p, s)$ is fair; notice that $p$ has lower precision than $d$ ($d$ is on an integer scale between 0 and 3 while $p$ is on an integer scale between 0 and 2). Adjustments for ties may cause the illusion of a weaker manifestation of swap distance minimization compared to other cognitive pressures.\footnote{
Finally, another reason for not using $\tau_b$ is a further consequence of the adjustment for ties: $\tau_b$ is undefined when the variance of one of the variables is zero. With this respect, $\tau_a$ is robust across conditions and simplifies the coding as it does not require to deal with the special case of zero variance.}
Hereafter $\tau$ means $\tau_a$.

Finally, notice that distinct word order principles are related and thus the Kendall $\tau$ correlation between two distance measures are all positive (\autoref{tab:correlogram_of_distances}).  
Kendall $\tau$ correlation between $d$ and $p$, $\tau(d,p)$ is significantly high while $\tau(d,c)$ and $\tau(p,c)$ are not (\autoref{tab:correlogram_of_distances}). Obviously, the fact that $\tau(d, c)$ is not significant is clearly due to a lack of statistical power. The arguments in the Appendix for the correlation between $c$ and some other variable, allow one to conclude that $\tau(d, c)$ is maximum and its right $\pvalue$ is minimum. 

\begin{table}[ht]
\center
\caption{\label{tab:correlogram_of_distances} Correlogram of Kendall $\tau$ correlation
between each distance measure. We use right-sided exact tests of correlation with $\tau_a$ on the matrix in \autoref{tab:acceptability_in_Malayalam}. 
Recall $d$ is the swap distance to the canonical order, $p$ is distance of the verb to the end of the triple and $c$ is the binary canonical distance.}   
\begin{tabular}{lll}
\toprule
\textbf{Variables} & \textbf{Kendall $\tau$ correlation} & \textbf{$\pvalue$} \\
\midrule
$d$ and $p$ & $0.67$ & $0.044$ \\
$d$ and $c$ & $0.33$ & $0.166$ \\
$p$ and $c$ & $0.27$ & $0.333$ \\
\bottomrule
\end{tabular}
\end{table}

\section{Material}

\label{sec:material}

\subsection{Why SOV languages}

The predictions in \autoref{eq:canonical_SOV} and \ref{eq:predictions_for_non_SOV_canonical_orders} raise the question of the ideal conditions where swap distance minimization should be tested (point (b) in Section \ref{sec:introduction}).
One could naively argue that these predictions should hold for every language in any condition. The challenge is that swap distance minimization is just one of the various principles that shape word order in languages: word order is a multiconstraint satisfaction problem \parencite{Ferrer2013f,Xu2017a}. Thus, the observation of the action of a specific word order principle requires identifying the conditions where that principle will suffer from less interference from other word order principles. For instance, it has been predicted theoretically and demonstrated empirically that the action of surprisal minimization (predictability maximization) should be more visible in short sentences \parencite{Ferrer2019a,Ferrer2020b}. Interestingly, it has been shown that syntactic dependency distance minimization is weaker in Warlpiri, a non-configurational language \parencite{Ferrer2020b}. Indeed, discontinuous constituents, one of the hallmarks of non-configurational languages \parencite{Hale1983a,Austin1996a} may indicate that dependency distance minimization is weaker, as it has been demonstrated that pressure to reduce the distance between syntactically related elements reduces the chance of discontinuity \parencite{Gomez2016a,Gomez2019a}.
Thus, interference from dependency distance minimization is expected to be weaker in non-configurational languages.
Recall that dependency distance minimization alone would draw the verb, the root of the triple, towards the center of the triple \parencite{Gildea2007a,Alemany2021a}.
In addition, we expect that, in languages that exhibit word order flexibility, there is more room for capturing the manifestation of swap distance minimization.  
English, which is an SVO language, is an example of a non-ideal language to test this because of its word order rigidity (Figure 8 of  \textcite{Levshina2023a}).  

Given the considerations above, this article focuses on SOV languages.
SOV languages are an ideal arena for testing this principle. In terms of representativity, SOV represents the most common dominant word order across languages \parencite{wals-81,Hammarstroem2016a}. 
Furthermore, SOV has been hypothesized to be an early stage in spoken languages \parencite{Gell-Mann2011a,Newmeyer2000}, and it has been regarded as a default basic word order \parencite{Givon1979a,Newmeyer2000}. This view is supported by the fact that 
SOV is often the dominant order found in sign languages which are at the early stages of community-level conventionalisation \parencite{Sandler2005a,Meir2010a}. 

\subsection{Data}

Data is borrowed from existing publications but is available as a single file in the repository of the article.\footnote{In the \textit{data} folder of \repository.} We borrow data from word order experiments in Malayalam 
\parencite{Namboodiripad2017a}, Korean \parencite{Namboodiripad2019b},
and Sinhalese \parencite{Tamaoka2011a}.\footnote{For each language, the target sentences have the same structure: animate subjects, inanimate objects, and active transitive verbs; sample stimuli can be found in each paper. Due to space limitations, we refer the reader to those original sources for further methodological details.}
In Korean and Malayalam, the target scores are average $z$-scored acceptability ratings from experiments in the spoken (listening) modality that are obtained from \textcite[Table 2.7 in Chapter 2]{Namboodiripad2017a} for Malayalam and Table 2 of \textcite{Namboodiripad2019b} for Korean. As is typical in acceptability judgment experiments, $z$-scores are used to control for individual variation in the use of the rating scale.

All participants in the Malayalam experiment ($N=18$) grew up speaking Malayalam in Kerala, India, where it is the dominant language. For Korean, we consider three groups that are borrowed from \textcite{Namboodiripad2019b}: bilingual speakers of Korean and English that are split into Korean-dominant ($N=30$), English-dominant active (individuals who are fluent in comprehension and production of spoken Korean; $N=13$), and English-dominant passive (individuals who are far more proficient in comprehension of spoken Korean than they are in production; $N=14$).

For Sinhalese, the participants are described as native speakers. The target scores are mean reaction times and mean error rates in the spoken ($N=42$) and written ($N=36$) modality. Mean reaction times and mean error rates are borrowed from Table 1 and Table 2 of \textcite{Tamaoka2011a} for the written (reading) and spoken (listening) modality, respectively. Here, it is not clear how the authors controlled for individual variation (i.e., via $z$-scores or other statistical methods).

To validate findings in Malayalam as \textcite{Namboodiripad2019a,Namboodiripad2017a} did, we borrow frequencies of each of the six orders of S, V and O from an online corpus \parencite[Table 4]{Leela2016a} as an additional target score.\footnote{The corpus comprises three types of discourse: interviews, discussions or debates, and conversations appearing in printed form in online media. The genres are  relatively comparable with the experimental items because they come from more casual and conversational contexts. 
The whole corpus comprises 5598 monotransitive sentences but only $67.1\%$ contain S, V and O according to Table 4 \parencite[Table 4]{Leela2016a}. Thus we estimate that the frequencies of S, V and O are based on $3756$ sentences. 
Further details be found at \url{http://hdl.handle.net/10803/399556} in Section 3.2.1 Methodology.}

By target score, we mean acceptability, reaction time, error, frequency, and the variants that result from pairwise contrasts.
Every target score (other than frequency) yields a rank variant that results from comparing the scores of every pair of distinct orders by means of some statistical test. Here we adopt the convention that these ranks reflect cognitive cost: the least costly order has rank 1, the second least costly has rank 2 and so on. 
The pairwise contrasts for Malayalam give, in order of decreasing acceptability \parencite{Namboodiripad2017a} 
$$SOV, OSV > SVO, OVS > VSO, VOS.$$
Thus, SOV and OSV have acceptability rank 1, SVO and OVS have acceptability rank 2, and VSO and VOS have acceptability rank VSO and VOS. 
For Sinhalese, the pairwise contrasts for reaction time in spoken language give, in order of increasing reaction time \parencite{Tamaoka2011a},
$$SOV < SVO, OVS < OSV, VSO, VOS$$
and thus SOV has reaction time rank 1, SVO and OVS have reaction time rank 2 and OSV, VSO and VOS have reaction time rank 3. 
For Korean, \textcite{Namboodiripad2019b} report in prose that the verb-medial orders and verb-initial orders group together, but the authors do not give more details. However, \parencite{Namboodiripad2020a} report pairwise comparisons\footnote{Bonferroni corrected, with pooled SD.} in a reanalysis of the same data. 
The ranking in order of decreasing acceptability is 
$$SOV > OSV > SVO, OVS > VSO, VOS.$$
Thus, SOV has acceptability rank 1, OSV has acceptability rank 2, SVO and OVS have acceptablity rank 3, and VSO and VOS have acceptability rank 4. All the pairwise contrasts for the languages investigated in this article are summarized in \autoref{tab:Sinhalese_pairwise_contrasts}.

\begin{table}[ht]
\center
\caption{Summary of pairwise contrasts, in order of increasing cognitive cost for Korean \parencite{Namboodiripad2020a}, Malayalam \parencite{Namboodiripad2017a} and \parencite{Tamaoka2011a}. }\label{tab:Sinhalese_pairwise_contrasts}
\begin{tabular}{ccccl}
\toprule
\textbf{Language} & \textbf{Group} & \textbf{Score} & \textbf{Modality} & \textbf{Pairwise contrasts} \\
\midrule
Korean    & Korean-dominant & acceptability & spoken & $SOV < OSV < SVO, OVS < VSO, VOS$  \\
Korean    & English-dominant active & acceptability & spoken & $SOV < OSV < SVO, OVS < VSO, VOS$ \\
Korean    & English-dominant passive & acceptability & spoken & $SOV < OSV < SVO, OVS < VSO, VOS$ \\
Malayalam & & acceptability & spoken & $SOV, OSV < SVO, OVS < VSO, VOS$ \\
Sinhalese & & reaction time & spoken & $SOV < SVO, OVS <
OSV, VSO, VOS$ \\
Sinhalese & & reaction time & written & $SOV < SVO, OVS, OSV, VSO, VOS$ \\
Sinhalese & & error & spoken & $SOV < SVO, OVS, VSO < OSV, VOS$ \\
Sinhalese & & error & written & $SOV, SVO, VSO, VOS, OVS, OSV$ \\
\bottomrule
\end{tabular}
\end{table}

We define a condition as the combination of modality (spoken or written), the target score, and, optionally, a group. 

The sign of certain scores that measure cognitive ease is inverted before the analyses to transform them into scores of cognitive cost. This is the case of acceptability ratings in Malayalam and Korean and word order frequencies in Malayalam.
As we are using Kendall $\tau$ correlation, the transformation does not alter the potential conclusions and has a clear advantage: all target scores can then be submitted to a right-sided Kendall correlation test. 
The resulting association between swap distance and acceptability rank is shown in \autoref{tab:acceptability_in_Malayalam}.

\section{Methodology}

\label{sec:methodology}

All the code used to produce the results is available in the repository of the article.\footnote{In the \textit{code} folder of \repository.}

\subsection{Kendall $\tau$ correlation}

We used R for the analyses. To compute Kendall $\tau$ correlation, we used neither the standard function to compute Kendall correlation, i.e. {\tt cor} (that runs in $O(n^2)$ time, where $n$ is the size of the sample), nor the faster implementation {\tt cor.fk} (that runs in $O(n \log n)$ time) from the {\tt pcaPP} library. The reason is that {\tt cor} function computes Kendall $\tau_b$ instead of $\tau_a$ when there are ties.\footnote{\url{https://stat.ethz.ch/R-manual/R-devel/library/stats/html/cor.html}} 
The documentation of {\tt cor.fk} is not clear on this matter, but our experience suggests that it also implements $\tau_b$: when we compute Kendall $\tau$ between the vector $(1,1,2,2,3,3)$ and itself, {\tt cor} and {\tt cor.fk} yield $1$, the maximum value, as expected by the definition of $\tau_b$. In contrast, our implementation of $\tau_a$ yields 0.8 because of the presence of ties.
Therefore we computed $\tau_a$ using a naive implementation by us that runs in $O(n^2)$ time. 

\subsection{Kendall $\tau$ correlation test}
The standard function for the Kendall correlation test, i.e. {\tt cor.test}, fails to compute accurate enough $\pvalues$. To fix it, we implemented a function that computes, exactly, the right $\pvalue$ of the Kendall correlation test by generating all permutations of the values of one of the variables and computing the Kendall $\tau$ correlation on each of those permutations. This exact test was also used for the differences $\tau(d, s) - \tau(p, s)$ and $\tau(d, s) - \tau(c, s)$.

\subsection{Maximum correlation}

We distinguish two reasons why a Kendall correlation is maximum:
\begin{itemize}
\item
Maximum given a distance measure.  Namely, given the sample as a matrix with two columns, one for the distance measure and the other for the score, there is no possible replacement of the values of the score that gives a higher correlation. See Property \ref{prop:range_of_variation_of_Kendall_tau} for the maximum correlation and Property 
\ref{prop:p_value_of_Kendall_tau_correlation_test_lower_bound} for the minimum right $\pvalue$ that is obtained when the correlation is maximum.
\item
Maximum given the sample. In this case, the correlation is the maximum given the bivariate sample used to compute the correlation. Namely, 
given the sample as a matrix with two columns, no permutation of a column of the sample matrix yields a higher correlation. This kind of maximum correlation is determined computationally from its definition. 
\end{itemize}
It is easy to see that if a correlation is maximum given the distance measure, then it is also maximum given the sample. 
We also extend this notions to the differences 
$\tau(d, s) - \tau(p, s)$ and $\tau(d, s) - \tau(c, s)$.

\subsection{A Monte Carlo global analysis}

The Kendall $\tau$ correlation tests above suffer from lack of statistical power: the minimum $p$-value for the Kendall $\tau$ depends on the distance measure and ranges between $0.1\bar{6}$ for $c$ and $0.00\bar{5}$ for $d$ (Property \ref{prop:p_value_of_Kendall_tau_correlation_test_lower_bound}). In the case of Sinhalese, none of the correlations across conditions and distance measures was statistically significant. To gain statistical power, we decided to perform a global statistical test for a given distance measure across all conditions. The statistic of that test is $S$, that is defined as the sum of all the Kendall correlations across all conditions for a given language and distance measure. The right $\pvalue$ of the test was estimated by a Monte Carlo procedure as the proportion of $T = 10^6$ randomizations where $S'$, the value of $S$ in a randomization, satisfied $S' \geq S$. Each randomization consists of producing a uniformly random permutation the values of one the target score that are assigned to the distance measure for each language and distance measure. Therefore, the smallest non-zero estimated $\pvalue$ that this test can produce is $1/T = 10^{-6}$. The test was adapted to assess the significance of the difference between pairs of distance measures.  

As an orientation for discussion, we assume a significance level of $\alpha = 0.05$ throughout this article. When we perform statistical tests over various individual conditions, we may suffer from multiple comparisons. When presenting results on individual conditions, we do not correct $\pvalues$ for them because this problem is addressed by the Monte Carlo test, where we apply Holm correction in two contexts. When answering the question of when a distance measure yields significance, we adjust the $\pvalues$ of $S(d)$, $S(p)$ and $S(c)$ for each language (9 comparisons). When answering the question of when the difference between swap distance minimization and another principle yields significance, we adjust the $\pvalues$ of $S(d) - S(c)$ and $S(d) - S(p)$ for each language (6 comparisons).

\section{Results}

\label{sec:results}

\subsection{Evidence of swap distance minimization}

In Korean, the correlation between acceptability and swap distance to the canonical order, ($\tau(d, s)$) is statistically significant in all three groups: Korean-dominant, English-dominant active, and English-dominant passive (\autoref{tab:summary_table}), suggesting that swap distance minimization is a robust effect. When acceptability ranks are used, the correlation turns out to be maximum given the sample.  In the English-dominant active group, the correlation  increases when mean acceptability is replaced by acceptability rank. 
In Malayalam, that correlation is statistically significant and maximum given the distance measure (\autoref{tab:summary_table}).   
When raw mean acceptability scores are replaced by acceptability ranks resulting from pairwise contrasts, the correlation ($\tau(p, s)$) weakens (the opposite phenomenon with respect to group of English-dominant active in Korean) but it is still significant. That suggests that, in Malayalam, raw mean acceptability scores contain some information about swap distance minimization that is lost when using these ranks, likely due to lack of statistical power in the pairwise contrasts. 
The support for the swap distance minimization from the canonical order is confirmed when acceptability ratings are replaced by frequencies from Leela's corpus, which achieve a maximum correlation given the sample (\autoref{tab:summary_table}). These findings suggest that swap distance minimization in Malayalam is a robust phenomenon because it is captured by independent measures.

 \begin{table}[ht]
\center
\caption{\label{tab:summary_table} 
The outcome of three correlation tests.
First, the Kendall $\tau$ correlation test between $s$, the target score, and $d$ is its swap distance to the canonical order SOV. Second, the Kendall $\tau$ correlation test between $s$ and $p$, the distance of the verb to the end. Second, the Kendall $\tau$ correlation test between $s$ and $c$, a binary variable that indicates if the order is canonical or not. For each correlation test, red indicates that the correlation is maximum (and the $\pvalue$ is minimum) given the distance measure; orange indicates that the correlation is maximum (and $\pvalue$ is minimum) given the sample. 
}

\begin{tabular}{ccccllllll}
\toprule
\textbf{Language} & \textbf{Group} &\textbf{Score} & \textbf{Modality} & \textbf{$\tau(d, s)$} & \textbf{$\pvalue$} & \textbf{$\tau(p, s)$} & \textbf{$\pvalue$} & \textbf{$\tau(c, s)$} & \textbf{$\pvalue$}  \\
\midrule
Korean & Korean-d & acceptability & spoken & 0.733 & 0.022 & \textcolor{red}{0.8} & \textcolor{red}{0.011} & \textcolor{red}{0.333} & \textcolor{red}{0.167} \\
Korean & Korean-d & acceptability rank & spoken & \textcolor{orange}{0.733} & \textcolor{orange}{0.022} & \textcolor{red}{0.8} & \textcolor{red}{0.011} & \textcolor{red}{0.333} & \textcolor{red}{0.167} \\
Korean & English-d a & acceptability & spoken & 0.667 & 0.033 & \textcolor{red}{0.8} & \textcolor{red}{0.011} & \textcolor{red}{0.333} & \textcolor{red}{0.167} \\
Korean & English-d a & acceptability rank & spoken & \textcolor{orange}{0.733} & \textcolor{orange}{0.022} & \textcolor{red}{0.8} & \textcolor{red}{0.011} & \textcolor{red}{0.333} & \textcolor{red}{0.167} \\
Korean & English-d p & acceptability & spoken & 0.733 & 0.022 & \textcolor{red}{0.8} & \textcolor{red}{0.011} & \textcolor{red}{0.333} & \textcolor{red}{0.167} \\
Korean & English-d p & acceptability rank & spoken & \textcolor{orange}{0.733} & \textcolor{orange}{0.022} & \textcolor{red}{0.8} & \textcolor{red}{0.011} & \textcolor{red}{0.333} & \textcolor{red}{0.167} \\
Malayalam & - & acceptability & spoken & \textcolor{red}{0.867} & \textcolor{red}{0.006} & \textcolor{red}{0.8} & \textcolor{red}{0.011} & \textcolor{red}{0.333} & \textcolor{red}{0.167} \\
Malayalam & - & acceptability rank & spoken & \textcolor{orange}{0.667} & \textcolor{orange}{0.044} & \textcolor{red}{0.8} & \textcolor{red}{0.011} & \textcolor{orange}{0.267} & \textcolor{orange}{0.333} \\
Malayalam & - & frequency & - & \textcolor{orange}{0.8} & \textcolor{orange}{0.011} & \textcolor{red}{0.8} & \textcolor{red}{0.011} & \textcolor{red}{0.333} & \textcolor{red}{0.167} \\
Sinhalese & - & reaction time & spoken & 0.333 & 0.228 & 0.267 & 0.289 & \textcolor{red}{0.333} & \textcolor{red}{0.167} \\
Sinhalese & - & reaction time rank & spoken & 0.467 & 0.117 & 0.4 & 0.133 & \textcolor{red}{0.333} & \textcolor{red}{0.167} \\
Sinhalese & - & reaction time & written & 0.6 & 0.061 & 0.4 & 0.167 & \textcolor{red}{0.333} & \textcolor{red}{0.167} \\
Sinhalese & - & reaction time rank & written & \textcolor{orange}{0.333} & \textcolor{orange}{0.167} & \textcolor{orange}{0.267} & \textcolor{orange}{0.333} & \textcolor{red}{0.333} & \textcolor{red}{0.167} \\
Sinhalese & - & error & spoken & 0.267 & 0.239 & 0.133 & 0.422 & \textcolor{red}{0.333} & \textcolor{red}{0.167} \\
Sinhalese & - & error rank & spoken & 0.4 & 0.15 & 0.2 & 0.333 & \textcolor{red}{0.333} & \textcolor{red}{0.167} \\
Sinhalese & - & error & written & 0 & 0.6 & -0.133 & 0.733 & \textcolor{orange}{0.2} & \textcolor{orange}{0.5} \\
Sinhalese & - & error rank & written & \textcolor{orange}{0} & \textcolor{orange}{1} & \textcolor{orange}{0} & \textcolor{orange}{1} & \textcolor{orange}{0} & \textcolor{orange}{1}
\\
\bottomrule
\end{tabular}
\mytablefootnote{$c$ is 0 if the order is canonical and 1 otherwise.
$p$ is 0 for verb-last, 1 for verb-medial and 2 for verb first. 
In Korean, the groups are {\em Korean-d} (Korean-dominant), {\em English-d a} (English-dominant active) and {\em English-d p} (English-dominant passive).}
\end{table}

In Sinhalese, we find no support for swap distance minimization on individual conditions except for reaction times in the written modality, where the correlation between reaction time and swap distance to the canonical order yields a borderline $\pvalue$ ($\pvalue$=0.061). When the raw mean reaction times in that modality are replaced by ranks obtained from pairwise contrasts, the correlation $\tau(d, s)$ decreases ($\tau(d, s)$ drops from $0.6$ to $\tau(p, s) = 0.3$), suggesting that raw reaction times may contain some information about swap distance minimization that is lost during the pairwise contrasts. Interestingly, the correlation with these ranks is maximum given the sample (\autoref{tab:summary_table}). In contrast, the rank transformation resulting from pairwise contrasts has the opposite effect for reaction time and error in the spoken modality: $\tau(d, s)$ increases after applying that transformation. That suggests that mean reaction time and mean error rate are noisy in the spoken modality.    

\begin{table}[ht]
\center
\caption{\label{tab:Monte_Carlo_global_analysis} Summary of the outcome of the Monte Carlo global analysis over all conditions for each language 
$S$ is the sum of the Kendall $\tau$ correlation over all conditions for a certain distance measure. $d$ is swap distance to the canonical order, $p$ is distance of the verb to the end of the triple, and $c$ is binary canonical distance. $\pvalues$ have been adjusted with Holm correction (as explained in \autoref{sec:methodology}.
}
\begin{tabular}{cllllllllll}
\toprule
\textbf{Language} & \textbf{$S(d)$} & \textbf{$\pvalue$} & \textbf{$S(p)$} & \textbf{$\pvalue$} & \textbf{$S(c)$} & \textbf{$\pvalue$} & \textbf{$S(d) - S(c)$} & \textbf{$\pvalue$} & \textbf{$S(d) - S(p)$} & \textbf{$\pvalue$} \\
\midrule
Korean & 4.33 & $<10^{-6}$ & 4.8 & $<10^{-6}$ & 2 & $2.4\cdot 10^{-5}$ & 2.33 & $9\cdot 10^{-4}$ & -0.47 & 1\\
Malayalam & 2.33 & $1.8\cdot 10^{-5}$ & 2.4 & $1.4\cdot 10^{-5}$ & 0.93 & $9.3\cdot 10^{-3}$ & 1.4 & $2.1\cdot 10^{-3}$ & -0.07 & 1\\
Sinhalese & 2.4 & $4.6\cdot 10^{-3}$ & 1.53 & 0.065 & 2.2 & $1.6\cdot 10^{-5}$ & 0.2 & 1 & 0.87 & 0.11
\\
\bottomrule
\end{tabular}
\end{table}

Although statistical support for swap distance minimization is missing on individual conditions in Sinhalese, the Monte Carlo global analysis (\autoref{tab:Monte_Carlo_global_analysis}) indicates that the sum of Kendall $\tau$ correlations over all conditions is significantly high ($S(d) = 2.4$, $\pvalue=1.5\cdot 10^{-3}$), suggesting that swap distance minimization is present but weak in Sinhalese. In Korean and Malayalam, the Monte Carlo global analysis just confirms the findings on individual languages (\autoref{tab:Monte_Carlo_global_analysis}; $\pvalue < 10^{-5}$ in both languages).

\subsection{Evidence of maximization of the predictability of the verb}

The correlation between the distance from the verb to the end of the sentence and each of the scores ($\tau(p,s)$) was statistically significant for Korean and Malayalam over all conditions, and it was indeed maximum given the distance measure (\autoref{tab:summary_table}). In both languages and across all conditions, $\tau(p,s)$ was maximum given the distance measure. However, the global analysis (\autoref{tab:Monte_Carlo_global_analysis}) revealed that the sum of Kendall $\tau$ correlations over all conditions is borderline significant in Sinhalese ($S(p) = 1.53$, $\pvalue=0.066$), suggesting that the maximization of the predictability of the verb has some global effect on that language.
In Korean and Malayalam, the Monte Carlo global analysis based on $S(p)$ just confirms the findings on individual languages (\autoref{tab:Monte_Carlo_global_analysis}; $\pvalue <  10^{-5}$ in both languages).

\subsection{Evidence of a preference for the canonical order}

The correlation between the binary distance to the canonical order and each of the scores ($\tau(p,s)$) was never statistically significant across languages and conditions (\autoref{tab:summary_table}), but this is due to the lack of the statistical power of the test (the minimum $\pvalue$ is $0.1\bar{6}$ as explained in the Appendix).
Indeed, the Monte Carlo global analysis based on $S(c)$ shows that a preference for the canonical order has a significant effect in all languages but much more strongly in Korean and Sinhalese  (\autoref{tab:Monte_Carlo_global_analysis}; $\pvalue <  10^{-2}$ in all languages). The latter could be due to the larger amount of conditions in Sinhalese and Korean, which may amplify the statistical effect. 

\begin{table}[ht]
\center
\caption{\label{tab:summary_swap_distance_versus_competing_distances_table} 
The outcome of two Kendall correlation difference tests.
The first test is on $\tau(d, s) - \tau(c, s)$. 
The second test is on $\tau(d, s) - \tau(p, s)$.
In each correlation test, orange indicates that the correlation is maximum (and then the $\pvalue$ is minimum) given the sample. 
}

\begin{tabular}{cccllllll}
\toprule
\textbf{Language} & \textbf{Group} & \textbf{Score} & \textbf{Modality} & \textbf{$\tau(d, s) - \tau(c, s)$} & \textbf{$\pvalue$} & \textbf{$\tau(d, s) - \tau(p, s)$} & \textbf{$\pvalue$}  \\
\midrule
Korean & Korean-d & acceptability & spoken & 0.4 & 0.1 & -0.067 & 0.753 \\
Korean & Korean-d & acceptability rank & spoken & 0.4 & 0.078 & -0.067 & 0.728 \\
Korean & English-d a & acceptability & spoken & 0.333 & 0.133 & -0.133 & 0.778 \\
Korean & English-d a & acceptability rank & spoken & 0.4 & 0.078 & -0.067 & 0.728 \\
Korean & English-d p & acceptability & spoken & 0.4 & 0.1 & -0.067 & 0.753 \\
Korean & English-d p & acceptability rank & spoken & 0.4 & 0.078 & -0.067 & 0.728 \\
Malayalam & - & acceptability & spoken & \textcolor{orange}{0.533} & \textcolor{orange}{0.006} & 0.067 & 0.5 \\
Malayalam & - & acceptability rank & spoken & 0.4 & 0.078 & -0.133 & 0.833 \\
Malayalam & - & frequency & - & 0.467 & 0.022 & 0 & 0.558 \\
Sinhalese & - & reaction time & spoken & 0 & 0.6 & 0.067 & 0.5 \\
Sinhalese & - & reaction time rank & spoken & 0.133 & 0.35 & 0.067 & 0.433 \\
Sinhalese & - & reaction time & written & 0.267 & 0.233 & 0.2 & 0.247 \\
Sinhalese & - & reaction time rank & written & 0 & 0.5 & 0.067 & 0.5 \\
Sinhalese & - & error & spoken & -0.067 & 0.611 & 0.133 & 0.256 \\
Sinhalese & - & error rank & spoken & 0.067 & 0.383 & 0.2 & 0.167 \\
Sinhalese & - & error & written & -0.2 & 0.883 & 0.133 & 0.267 \\
Sinhalese & - & error rank & written & \textcolor{orange}{0} & \textcolor{orange}{1} & \textcolor{orange}{0} & \textcolor{orange}{1}
\\
\bottomrule
\end{tabular}
\mytablefootnote{
$\tau(d, s)$ is the correlation between a score and swap distance.
$\tau(c, s)$ is the correlation between a score and the binary distance to canonical order.
$\tau(p, s)$ is the correlation between a score and the distance to end of the verb. 
In Korean, the groups are {\em Korean-d} (Korean-dominant), {\em English-d a} (English-dominant active) and {\em English-d p} (English-dominant passive).}
\end{table}

\subsection{Can the results be reduced to simply a preference for the canonical order?}

It could be argued the finding of swap distance minimization effects is a mere consequence of a rather obvious expectation: canonical orders are easier to process than non-canonical orders. Indeed, swap distance minimization also predicts a preference for canonical orders but adds a gradation on non-canonical orders. However, we find that the correlation between a target score and swap distance to canonical order ($\tau(d, s)$) as well as the correlation between a target score and distance of the verb to the end ($\tau(p, s)$) are always greater than the correlation between the target score and being canonical or not ($\tau(c, s)$) in both Korean and Malayalam; this is also the case in Sinhalese with two exceptions: error in the spoken and written modality (\autoref{tab:summary_table} and \autoref{tab:summary_swap_distance_versus_competing_distances_table}). 
In Korean, the difference $\tau(d, s) - \tau(c, s)$ is always positive but never significant. However, the difference is borderline significant in all groups when acceptability ranks are used ($\pvalue = 0.078$).
In Malayalam, the analysis of $\tau(d, s) - \tau(c, s)$  (\autoref{tab:summary_swap_distance_versus_competing_distances_table}) indicates that swap distance minimization has a significantly stronger effect than a preference for a canonical order across conditions (although the $\pvalue$ of acceptability ranks, i.e. $0.078$ is borderline). Furthermore, concerning mean acceptability, the difference is maximum given the sample. 
The Monte Carlo global analysis shows that indeed $S(d)-S(c)$ is significantly large in both Korean and Malayalam ($\pvalue < 10^{-4}$), indicating that swap distance minimization is significantly stronger than a preference for a canonical order (\autoref{tab:Monte_Carlo_global_analysis}).

In Sinhalese, the difference $\tau(d, s) - \tau(c, s)$ is never statistically significant across conditions and that is confirmed by the Monte Carlo global analysis ($\pvalue=0.369$).  (\autoref{tab:Monte_Carlo_global_analysis}). 

\subsection{Swap distance minimization versus maximization of the predictability of the verb}

In Korean, the effect of swap distance minimization is weaker than the force that drags the verb towards the end. In particular, the correlation between acceptability and swap distance to the canonical order ($\tau(d, s)$) is always smaller than the correlation between mean acceptability and verb position ($\tau(p, s)$). In \autoref{tab:summary_table} and \autoref{tab:summary_swap_distance_versus_competing_distances_table}, we can check that $\tau(d, s)$ < $\tau(p, s)$ in all conditions. The $\pvalue$ of $\tau(d, s)$ are greater than those of $\tau(p, s)$ (\autoref{tab:summary_table}). Unsurprisingly, we find that the $\tau(d, s) - \tau(p, s)$ is never significant -- neither on individual conditions (\autoref{tab:summary_swap_distance_versus_competing_distances_table}), nor on the global analysis (see $S(d) - S(p)$ in \autoref{tab:Monte_Carlo_global_analysis}).

In Malayalam results are mixed: the sign of  $\tau(d, s)- \tau(p, s)$ depends on the condition but $\tau(d, s)$ beats $\tau(p, s)$ in the condition where both $\tau(d, s)$ and $\tau(p, s)$ are maximum given the distance measure 
($\tau(d, s) = 0.867> \tau(p, s) = 0.8$ in \autoref{tab:summary_table}). Thus, in that condition, swap distance minimization has an effect in Malayalam that cannot be reduced to preference for verb-last.  
The lack of verb initial orders with two overt arguments in Leela's corpus, in spite of being grammatically possible, suggests that undersampling may be limiting the observation of a stronger swap distance minimization effect when frequencies are used as a proxy for cognitive cost.
As it happened with Korean, we find that the $\tau(d, s) - \tau(p, s)$ is never significant neither on individual conditions (\autoref{tab:summary_swap_distance_versus_competing_distances_table}) nor on the global analysis (see $S(d) - S(p)$ in \autoref{tab:Monte_Carlo_global_analysis}).

In Sinhalese we find the opposite phenomenon with respect to Korean: the effect of swap distance minimization is stronger: given a score and a condition,  $\tau(d,s) > \tau(p, s)$ in all cases.
Interestingly, we find that the $\tau(d, s) - \tau(p, s)$ is never significant on individual conditions (\autoref{tab:summary_swap_distance_versus_competing_distances_table}) and this is confirmed in the global analysis (see $S(d) - S(p)$ in \autoref{tab:Monte_Carlo_global_analysis}).

\section{Discussion}

\label{sec:discussion}

We have seen that an effect consistent with swap distance minimization is found in all three languages (\autoref{tab:summary_table}). However, we have seen that in Sinhalese, the effect is weak and requires a global analysis over all conditions for it to become statistically significant (\autoref{tab:Monte_Carlo_global_analysis}). 

We have demonstrated that swap distance minimization is significantly stronger than a preference for the canonical order in Korean and Malayalam by means of a global analysis across conditions (\autoref{tab:Monte_Carlo_global_analysis}). In Malayalam, swap distance minimization is so strong that its superiority with respect to a preference for the canonical order manifests also on individual conditions (\autoref{tab:summary_swap_distance_versus_competing_distances_table}). 
Notice that the acceptability ranks in \autoref{tab:acceptability_in_Malayalam} coincide with a labelling of the vertices of the permutahedron following a traversal of the permutahedron from SOV (\autoref{fig:breadth_first_traversal_permutation_ring}), which is known as breadth first traversal in computer science \parencite{Cormen1990}.
There are $5!=120$ possible traversals starting at SOV, but only 4 four of them are breadth first traversals; the acceptability rank (that results from transforming mean acceptability scores into ranks) has hit one of them.
In Sinhalese, swap distance minimization is neither significantly stronger than a preference for the canonical order nor significantly stronger than the preference for verb-last (\autoref{tab:Monte_Carlo_global_analysis}) that is believed to explain acceptability in Malayalam \parencite{Namboodiripad2016a,Namboodiripad2017a}.

\begin{figure}[t]
  \centering
      \includegraphics[width=0.4\textwidth]{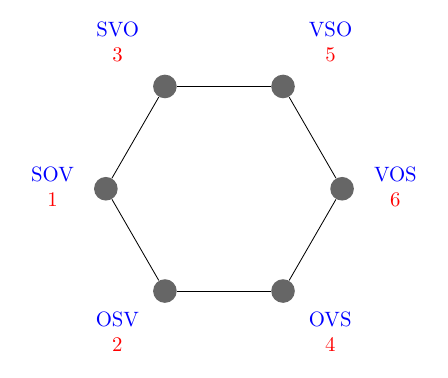}
  \caption{\label{fig:breadth_first_traversal_permutation_ring} The word order permutation ring with the acceptability rank of every word order marked in red below each word order. The word order with the highest mean acceptability has rank 1, the word order with the 2nd highest mean acceptability has rank 2 and so on.}
\end{figure}

We have provided evidence that swap distance minimization is cognitively relevant in capturing human behavior: it is significantly stronger than the principle it subsumes, i.e. the preference for the canonical order, in Korean and in Malayalam. 
In Sinhalese, we failed to find that swap distance minimization is acting significantly stronger than a preference for the canonical order.
It is possible that swap distance minimization is acting beyond a preference for the canonical order, but its additional contribution with respect to other word order principles may remain statistically invisible. First, recall that swap distance minimization subsumes the preference for the canonical word order. Second, swap distance minimization and preference for verb-last are strongly correlated. Recall that the Kendall $\tau$ correlation between $d$ and $p$, $\tau(d,p)$ is significantly high while $\tau(d,c)$ and $\tau(p,c)$ are not (\autoref{tab:correlogram_of_distances}). This is in line with the view that word order is a multiconstraint satisfaction principle, and word orders can compete or collaborate \parencite{Ferrer2013f}. Third, our analyses on Sinhalese are based on data which is averaged across participants. Because we could not control for individual variation in that language as in Namboodiripad's dataset (\autoref{sec:material}), the effects of swap distance minimization could indeed be stronger than what our analysis has revealed. Thus, controlling for individual variation in Sinhalese should be the subject of future research. 
Finally, the behavioral measures are not uniform across languages, as we currently do not have acceptability scores for Sinhalese, which could contribute to apparent differences across languages.

In neurolinguistics, it has been found that 
activity in certain brain regions (e.g., the left inferior frontal gyrus) is higher for non-canonical orders than for canonical orders \parencite{Meyer2016a}. We suggest an interpretation of this finding as a consequence of a mental ``rotation'' operation to retrieve the canonical order (\autoref{fig:rotation_in_permutation_ring}) and propose a new research line: the use of swap distance as a more fine grained predictor of brain activity with respect to the traditional binary contrast of canonical versus non-canonical order \parencite[Table 48.1]{Meyer2016a}.

The strength of the swap distance minimization compared to the effect of other principles depends on the language. In Korean, the manifestation of swap distance minimization is weaker than that of the maximization of the predictability of the verb but stronger than a preference for the canonical order (\autoref{tab:summary_swap_distance_versus_competing_distances_table}). 
In Malayalam, swap distance minimization exhibits the strongest effect (\autoref{tab:summary_table}). 
In Sinhalese, swap distance minimization is the second strongest, as in Korean, but the preference for a canonical order exhibits the strongest effect(\autoref{tab:summary_table}).

We speculate that the major findings summarized above are consistent with the following scenario. First, recall that there is evidence that Korean exhibits a word order flexibility close to that of English and that Korean is more rigid than Malayalam \parencite{Levshina2023a}. The proposals of Sinhalese and Malayalam as non-configurational languages \parencite{Tamaoka2011a,Prabath2017a,Mohanan1983a} suggest these two languages exhibit more word order freedom than Korean.\footnote{
Non-configurationality can be seen from a strong {\em a priori} theoretical assumption, namely that non-configurationality is an adjustable parameter in a language as opposed to an emergent property which becomes apparent via the interaction of a constellation of other factors \cite{Ferrer2013f}. We take the position of \cite{Levshina2023a}, that languages are not separable into configurational or non-configurational, but rather that they vary along a cline in degree of flexibility. However, we do currently mention a role for non-configurationality on Page 19.}

Second, consider the following arguments. 
As we discussed in \autoref{sec:introduction}, strong evidence of swap distance minimization requires that interference from other word order principles is reduced. The fact that Korean is the only language where the maximization of the predictability of the verb has the strongest effect, provides additional support for the rigidity of Korean and the possible interference of that principle with swap distance minimization. As one moves from more rigid word orders to more flexible word orders, one expects that the manifestation of swap distance minimization becomes clearer. 
Accordingly, Malayalam exhibits the strongest manifestation of swap distance minimization but a weaker effect of the maximization of the predictability of the verb.
However, an excess of word order flexibility may shadow the manifestation of swap distance minimization. If we assume that Sinhalese has the highest degree of word order flexibility, it is not surprising that none of the principles has a significant effect on individual conditions (\autoref{tab:summary_table}) and that swap distance minimization does not show a significantly stronger effect than other word order preferences after a global analysis over conditions (\autoref{tab:Monte_Carlo_global_analysis}).

A weakness of the arguments above is that, for Sinhalese, we are not measuring word order flexibility in the same way as for Korean and Malayalam. We are just assuming it should be very flexible according the non-configurational hypothesis \parencite{Tamaoka2011a,Prabath2017a}, and, as argued in \parencite{Levshina2023a}, going from categorical to gradient characterizations of constituent order typology is critical to building explanatory models in this domain (see also \textcite{Yan2023a} for research on categorical versus gradient characterizations). Thus, an urgent task is to investigate word order flexibility in Sinhalese in a cross-linguistically comparable way, perhaps with the same methodology as in Namboodiripad's research program \parencite{Namboodiripad2017a,Namboodiripad2019a,Namboodiripad2019b}.
The complementary is also another important question for future research, namely, investigating reaction times and error rates in Malayalam and Korean with the methodology of \parencite{Tamaoka2011a}. 
We hope this research stimulates researchers also to investigate languages with canonical orders other than SOV \parencite[cf.][]{Garrido2023a}.
The predictions of swap distance minimization on non-SOV languages are already available in \autoref{eq:predictions_for_non_SOV_canonical_orders}.

Finally, an implication of swap distance minimization for word order evolution is a tendency to preserve the canonical order, as variants that deviate from it will be more costly (contra misinterpretations of efficiency-based explanations which might lead one to predict that SOV languages should eventually change to SVO). That tendency would be reinforced by other principles that determine the optimality of the canonical word order, e.g., in verb final languages, the placement of the verb is optimal with respect to maximization of the predictability of the verb \parencite{Ferrer2013f}, and we have shown that a preference for verb-last and swap distance minimization are strongly correlated (\autoref{tab:correlogram_of_distances}). Therefore, it is not surprising that grammars are robustly transmitted even during instances of rapid discontinuities in language change, such as the emergence of creole languages; the dominant word order in creoles is overwhelmingly that of the lexifiers \parencite{Blasi2017a}. As such, swap distance minimization provides one potential answer for why languages vary when it comes to how much they minimize dependencies. Moreover, the findings here exemplify cases where general efficiency-based explanations do not lead to the same outcomes for every language, even when those languages on the surface seem to be very similar. Additional typological features, such as degree of flexibility, interact with swap distance minimization and dependency length minimization, leading us to predict structured variation across languages in how these very general principles are applied and manifest.  

\iftoggle{anonymous}{}{

\section*{Acknowledgments}

We are very grateful to L. Alemany-Puig for a careful revision of the manuscript and to L. Meyer for helpful comments. We also thank V. Franco-Sánchez and A. Martí-Llobet for helpful discussions on swap distance minimization. We became aware of the concept of permutahedron in combinatorics thanks to V. Franco-Sánchez.
RFC is supported by a recognition 2021SGR-Cat (01266 LQMC) from AGAUR (Generalitat de Catalunya) and the grants AGRUPS-2022 and AGRUPS-2023 from Universitat Politècnica de Catalunya.

}

\printbibliography


\section*{Appendix}

\subsection*{The maximum Kendall correlation}

Recall the definition of $\tau$ in \autoref{eq:Kendall_tau_correlation}. Let $n_0$ be the number of pairs that are neither concordant nor discordant.

\begin{property}
\begin{equation}
\frac{n_0}{{n \choose 2}} - 1 \leq \tau \leq 1 - \frac{n_0}{{n \choose 2}}. 
\label{eq:variation_of_tau}
\end{equation}
\end{property}
\begin{proof}
 
By definition, $$n_c + n_d + n_0 = {n \choose 2}.$$  
The substitution
$$n_c = {n \choose 2} - n_d - n_0$$
transforms \autoref{eq:Kendall_tau_correlation} into
$$\tau = 1 - \frac{2n_d + n_0}{{n \choose 2}}.$$
The latter and the fact that $n_d \geq 0$ by definition leads to 
$$\tau \leq 1 - \frac{n_0}{{n \choose 2}}.$$
By symmetry, the substitution
$$n_d = {n \choose 2} - n_c - n_0$$
transforms \autoref{eq:Kendall_tau_correlation} into
$$\tau = \frac{2n_c + n_0}{{n \choose 2}} - 1.$$
The latter and the fact that $n_c \geq 0$ by definition leads to 
$$\tau \geq \frac{n_0}{{n \choose 2}} - 1.$$
Hence we conclude \autoref{eq:variation_of_tau}.
\end{proof}

Consider the Kendall $\tau$ correlation between $x$ and $y$. Let $N_x$ be the number of distinct values of $x$ and $N_y$ be the number of distinct values of $y$. Let us group the values of $x$ in a tie and define $t_i$ the number of tied values in the $i$-th group. Let us group the values of $y$ in a tie and define $u_i$ the number of tied values in the $i$-th group.
Then 
\begin{property}
\begin{equation}
n_0 \geq \max\left(\sum_{i=1}^{N_x} {t_i \choose 2}, \sum_{i=1}^{N_y} {u_i \choose 2}\right).
\label{eq:number_of_pairs_that_are_neither_concordant_nor_discordant_lower_bound}
\end{equation}
\end{property}
\begin{proof}
Notice that pairs formed with values in a tie cannot be neither concordant nor discordant. Then the $i$-th tie group of $x$ contributes with ${t_i \choose 2}$ pairs of points that are not concordant nor discordant. Then, the overall contribution to pairs of this sort by $x$ is 
$$\sum_{i=1}^{N_x} {t_i \choose 2}.$$
Similarly, the contribution by $y$ to pairs of points that are neither concordant nor discordant  is 
$$\sum_{i=1}^{N_y} {u_i \choose 2}.$$
Combining the contributions of $x$ and $y$ one retrieves \autoref{eq:number_of_pairs_that_are_neither_concordant_nor_discordant_lower_bound}. The reader with some statistical background may have already realized that the summations over the number of distinct pairs in a group above are the ingredients of the adjustment for ties in the denominator in the definition of $\tau_b$ \parencite{Kendall1970a}.
\end{proof}

The next property presents the range of variation of $\tau$ for each distance measure 
\begin{property}
\label{prop:range_of_variation_of_Kendall_tau}
Consider the Kendall correlation, i.e $\tau(x,y)$ where $x$ is some distance measure and $y$ can be any (for instance, $y$ can be some score $s$). We have that 
\begin{eqnarray*}
-\frac{13}{15} = -0.8\bar{6} \leq \tau(d, y) \leq \frac{13}{15} = 0.8\bar{6} \\
-\frac{4}{5} = -0.8 \leq \tau(p, y) \leq \frac{4}{5} = 0.8.\\
-\frac{1}{3} = -0.\bar{3} \leq \tau(c, y) \leq \frac{1}{3} = 0.\bar{3}.
\end{eqnarray*}
\end{property}

\begin{proof}
Now we will derive the range of variation of $\tau$ for each distance measure by applying an implication of 
\autoref{eq:number_of_pairs_that_are_neither_concordant_nor_discordant_lower_bound}, namely 
$$n_0 \geq \sum_{i=1}^{N_x} {t_i \choose 2}.$$ 
Notice that $$n_0 = \sum_{i=1}^{N_x} {t_i \choose 2}$$ 
This happens when all the values of $y$ are different. This is a typical situation when using continuous scores, as repeated values are unlikely except in case of lack of numerical precision. 

Consider the matrix in \autoref{tab:acceptability_in_Malayalam}.
In case of $\tau(d, s)$, there are four groups with $t_1 = t_4 = 1$ (for $d=1$ and $d=3$) and $t_2 = t_3 = 2$ (for $d=1$ and $d=2$), that yield 
$$n_0 = \sum_{i=1}^{N_x} {t_i \choose 2} = 2{2 \choose 2} = 2$$ and then \autoref{eq:variation_of_tau} gives
$$\tau(d, s) \leq 1 - \frac{2}{15} = \frac{13}{15}.$$
In case of $\tau(p, s)$, there are three groups with $t_1 = t_2 = t_3 = 2$ (two points in a tie for $p=0$, $p=1$ and also $p=2$), that yield 
$$n_0 = 3{2 \choose 2} = 3$$ 
and then \autoref{eq:variation_of_tau} gives
$$\tau(p, s) \leq 1 - \frac{3}{15} = \frac{4}{5}.$$
Finally, in case of $\tau(c, s)$, there are only two groups with $t_1 = 1$ and $t_2 = 5$ (5 points in a tie for $c=1$), that yield 
$$n_0 = {5 \choose 2} = 10$$ 
and then \autoref{eq:variation_of_tau} gives
$$\tau(c, s) \leq 1 - \frac{10}{15} = \frac{1}{3}.$$
The lower bounds are obtained just by inverting the sign thanks to \autoref{eq:variation_of_tau}. 
\end{proof}

The following corollary indicates that if $\tau(d, y)$ is sufficiently large then no other distance measure can give a higher correlation and also the symmetric, namely, if $\tau(d, y)$ is sufficiently small then no other distance measure can give a smaller correlation.
\begin{corollary}
If $\tau(d, y) > 1/3$ then $\tau(d, y) > \tau(c, y)$.
If $\tau(d, y) > 4/5$ then $\tau(d, y) > \tau(p, y), \tau(c, y)$.
\\
If $\tau(d, y) < -1/3$ then $\tau(d, y) < \tau(c, y)$.
If $\tau(d, y) < -4/5$ then $\tau(d, y) < \tau(p, y), \tau(c, y)$.
\end{corollary}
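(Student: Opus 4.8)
The plan is to obtain all four implications as immediate consequences of the bounds already established in Property~\ref{prop:range_of_variation_of_Kendall_tau}; no new computation is required. The entire argument is a chain of transitivity steps that pit the hypothesis on $\tau(d,y)$ against the ranges of $\tau(c,y)$ and $\tau(p,y)$. The underlying structural fact doing all the work is that the three distance measures have progressively tighter ranges: $\tau(c,y)$ lives in $[-1/3,\,1/3]$, $\tau(p,y)$ in $[-4/5,\,4/5]$, and $\tau(d,y)$ in the widest interval $[-13/15,\,13/15]$. Hence $\tau(d,y)$ can attain values strictly larger (or smaller) than anything $\tau(p,y)$ or $\tau(c,y)$ can ever reach.

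First I would treat the upper-bound claims. For the statement that $\tau(d,y) > 1/3$ implies $\tau(d,y) > \tau(c,y)$, I invoke $\tau(c,y) \leq 1/3$ from Property~\ref{prop:range_of_variation_of_Kendall_tau}, so that the hypothesis gives $\tau(d,y) > 1/3 \geq \tau(c,y)$ and the strict inequality follows by transitivity. For the stronger statement that $\tau(d,y) > 4/5$ implies $\tau(d,y) > \tau(p,y)$ and $\tau(d,y) > \tau(c,y)$, the same property supplies $\tau(p,y) \leq 4/5$ and $\tau(c,y) \leq 1/3 < 4/5$, so both comparisons are forced at once by $\tau(d,y) > 4/5$.

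Next I would handle the two lower-bound claims by the symmetric argument. Since $\tau(c,y) \geq -1/3$ and $\tau(p,y) \geq -4/5$, the hypotheses $\tau(d,y) < -1/3$ and $\tau(d,y) < -4/5$ respectively push $\tau(d,y)$ strictly below those quantities, giving $\tau(d,y) < \tau(c,y)$ in the first case and both $\tau(d,y) < \tau(p,y)$ and $\tau(d,y) < \tau(c,y)$ in the second.

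There is no real obstacle here; the only point requiring a moment of care, rather than any genuine difficulty, is the handling of strict versus non-strict inequalities. The hypothesis always contributes the strict relation ($>$ or $<$) while the bound from Property~\ref{prop:range_of_variation_of_Kendall_tau} contributes the non-strict one ($\leq$ or $\geq$), and strictness propagates correctly through each transitive step. I would simply state each of the four implications in turn, citing the relevant bound, and conclude.
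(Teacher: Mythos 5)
Your proof is correct and matches the paper's approach exactly: the paper dispatches this corollary as ``a trivial consequence'' of Property~3, and your argument is precisely the unpacking of that one-liner, using the bounds $\tau(c,y)\leq 1/3$ and $\tau(p,y)\leq 4/5$ (and their negatives) plus transitivity, with strictness correctly inherited from the hypothesis.
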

\begin{proof}
A trivial consequence of Proposition \ref{prop:range_of_variation_of_Kendall_tau}.
\end{proof}

\subsection*{The minimum $\pvalue$ of the Kendall correlation test}

As we explain in \autoref{sec:methodology}, the $\pvalue$ of the Kendall $\tau$ correlation test is computed exactly by enumerating all the $6!=720$ permutations. In general, 
$$\pvalue \geq \frac{m}{n!},$$
where $m$ is the number of permutation with the same $\tau$ as the actual one. Notice that $m \geq 1$ because the permutation that coincides with the current ordering yields the same $\tau$.
As the test is one-sided and $m \geq 1$, one obtains 
$$\pvalue \geq 1/6! = \frac{1}{720} = 0.0013\bar{8}.$$
However, a more accurate lower bound of $m$ is given by 
\begin{property}
\begin{equation}
m \geq \max\left(\prod_{i=1}^{N_x} t_i!, \prod_{i=1}^{N_y} u_i!\right).
\label{eq:number_of_permutations_with_same_correlation_lower_bound}
\end{equation}
\begin{proof}
Every permutation of values in the same tie group does not produce a different sequence. For the $i$-th group of $x$, there are $t_i!$ permutations of values in the same group that do not produce a different sequence. Integrating all the groups, one obtains that there are 
$$\prod_{i=1}^{N_x} t_i!$$
permutations of the $x$ column of the matrix that produce the same sequence. 
By symmetry, there are
$$\prod_{i=1}^{N_y} u_i!$$
permutations of the $y$ column of the matrix that produce the same sequence. 
Combining the contributions of $x$ and $y$, we obtain \autoref{eq:number_of_permutations_with_same_correlation_lower_bound}.
\end{proof}
\end{property}

\autoref{eq:number_of_permutations_with_same_correlation_lower_bound} leads to 
more accurate lower bounds of the $\pvalue$ of $\tau$ that are presented in the following property.

\begin{property}
\label{prop:p_value_of_Kendall_tau_correlation_test_lower_bound}
Consider the $\pvalue$ of the exact right sided correlation test of $\tau(x,y)$ where $x$ is some distance and $y$ can be any (for instance, $y$ can be some score $s$). The $\pvalue$ of $\tau(d, y)$ satisfies
$$\pvalue \geq \frac{1}{180} = 0.00\bar{5}.$$
The $\pvalue$ of $\tau(p, y)$ satisfies
$$\pvalue \geq \frac{1}{90} = 0.0\bar{1}.$$
The $\pvalue$ of $\tau(c, y)$ satisfies
$$\pvalue \geq \frac{1}{6} = 0.1\bar{6}.$$
\end{property}

\begin{proof}
Now we will derive a lower bound of the $\pvalue$ for each distance measure neglecting any information of about the distribution of the values of $y$, namely applying an implication of \autoref{eq:number_of_permutations_with_same_correlation_lower_bound}, that is
$$m \geq \prod_{i=1}^{N_x} t_i!.$$
Notice that 
$$m = \prod_{i=1}^{N_x} t_i!$$ 
holds when all the values of $y$ are different. This is a typical situation when using continuous scores, as we have explained above. 

For $\tau(d,s)$, the four groups with $t_1 = t_4 = 1$ (for $d=1$ and $d=3$) and $t_2 = t_3 = 2$ (for $d=1$ and $d=2$) give 
$$\pvalue \geq \frac{4}{6!} = \frac{1}{180}.$$
For $\tau(p,s)$, the three groups with $t_1 = t_2 = t_3 = 2$ (two points in a tie for $p=0$, $p=1$ and also $p=2$) give
$$\pvalue \geq \frac{8}{6!} = \frac{1}{90}.$$
Finally, for $\tau(c,s)$, the only two groups with $t_1 = 1$ and $t_2 = 5$ (5 points in a tie for $c=1$)
give 
$$\pvalue \geq \frac{5!}{6!} = \frac{1}{6} = 0.1\bar{6}.$$
\end{proof}

\end{document}